\algrenewcommand\textproc{}
\DeclareMathOperator*{\argmax}{arg\,\max}
\DeclareMathOperator*{\Ex}{\mathbb{E}}
\definecolor{green}{RGB}{0, 128, 0}
\newcommand{\ignore}[1]{{}}
\newcommand{\mybinom}[3][0.75]{\scalebox{#1}{$\tbinom{#2}{#3}$}}
\newcommand{\lexp}{\textbf{LExp}}
\newcommand{\adclick}{Ad-Clicks}
\newcommand{\coupon}{Coupon-Purchase}
\definecolor{dblue}{rgb}{0.2, 0.2, 0.9}
  \def\my@tag@font{\normalsize}
  \def\maketag@@@#1{\hbox{\m@th\normalfont\my@tag@font#1}}
  \let\amsmath@eqref\eqref
  \renewcommand\eqref[1]{{\let\my@tag@font\relax\amsmath@eqref{#1}}}
\DeclareRobustCommand{\rvdots}{
  \vbox{
    \baselineskip4\p@\lineskiplimit\z@
    \kern-\p@
    \hbox{.}\hbox{.}\hbox{.}
  }}
\newtheorem{theorem}{Theorem}
\begin{document}

\title{Multi-level Feedback Web Links Selection Problem: Learning and
  Optimization}

\author{Kechao Cai\textsuperscript{1}, Kun Chen\textsuperscript{2}, Longbo
  Huang\textsuperscript{2}, John C.S. 
  Lui\textsuperscript{1}, \\
  \textsuperscript{1}{Department of Computer Science \& Engineering,  The Chinese University of Hong Kong}\\
  \textsuperscript{2}{Institute for Interdisciplinary Information Sciences
    (IIIS), Tsinghua University}}
 
\maketitle

\begin{abstract}
  Selecting the right web links for a website is important because appropriate
  links not only can provide high attractiveness but can also increase the
  website's revenue. 
  In this work, we first show that web links have an intrinsic \emph{multi-level
    feedback structure}. 
  For example, consider a $2$-level feedback web link: the $1$st level feedback
  provides the Click-Through Rate (CTR) and the $2$nd level feedback provides
  the potential revenue, which collectively produce the compound $2$-level revenue.
  We consider the context-free links selection problem of
  selecting links for a homepage so as to maximize the total compound $2$-level
  revenue while keeping the total $1$st level feedback above a preset threshold. 
  We further generalize the problem to links with $n~(n\!\ge\!2)$-level feedback
  structure.
  To our best knowledge, we are the first to model the links selection problem
  as a constrained multi-armed bandit problem and design an effective links
  selection algorithm by learning the links' multi-level structure with provable
  \emph{sub-linear} regret and violation bounds. 
  We uncover the multi-level feedback structures of web links in two real-world
  datasets. 
  We also conduct extensive experiments on the datasets to compare our
  proposed \lexp{} algorithm with two state-of-the-art context-free bandit
  algorithms and  
  show that \lexp{} algorithm is the most effective in
  links selection while satisfying the constraint.
\end{abstract}

\section{Introduction}
\label{sec:intro}
Websites nowadays are offering many web links on their homepages to attract
users. 
For example, news websites such as Flipboard, CNN, and BBC constantly
update links to the news shown on their homepages to attract news readers. 
Online shopping websites such as Amazon and Taobao frequently refresh various
items on their homepages to attract customers for more purchase.

Each link shown on a homepage is intrinsically associated with a
\emph{``multi-level feedback structure''} which provides valuable information on
users' behaviors.
Specifically, based on the user-click information, the website can estimate the
probability (or Click-Through Rate (CTR)) that a user clicks that link, and we
refer to this as the \emph{$1$st level feedback}. 
Moreover, by tracking the behaviors of users after clicking the link (e.g.,
whether users will purchase products associated with that link), the website can
determine the revenue it can collect on that web page, we refer to this as the
\emph{$2$nd level feedback}. 
The \emph{compound $2$-level feedback} is a function of the $1$st level feedback
and the $2$nd level feedback. 
Naturally, the $1$st level feedback measures the \emph{attractiveness} of the
link, while the $2$nd level feedback measures the \emph{potential revenue} of
the link given that the link is clicked, and the compound $2$-level feedback
measures the \emph{compound revenue} of the link. 
In summary, for a given homepage, its total attractiveness is the sum of CTRs of
all links on that homepage, and its total compound revenue is the sum of the
compound revenue of all links on that homepage. 
Both the total attractiveness and the total compound revenue of a homepage are
important measures for investors to assess the value of a website~\cite{Kohavi2014KDD}.

Due to the limited screen size of mobile devices or the limited size of an
eye-catching area on a web page, homepages usually can only contain a finite
number of web links (e.g., Flipboard only shows 6 to 8 links for its users on
its homepage frame without sliding the frame.). 
Moreover, contextual information (e.g., the users' preferences) is not always
available due to visits from casual users, cold start~\cite{elahi2016survey} or
cookie blocking~\cite{meng2016trackmeornot}. 
Furthermore, a website with an unattractive homepage would be difficult to
attract investments.
In this case, it is important for website operators to consider the
\emph{context-free} web links selection problem: how to select a finite number
of links from a large pool of web links to show on its homepage so to maximize
the total compound revenue, while keeping the total attractiveness of the homepage above a preset
threshold?

The threshold constraint on the attractiveness of the homepage makes the above
links selection problem challenging. 
On the one hand, selecting those links with the highest CTRs ensures that the
attractiveness of the homepage is above the threshold, but it does not
necessarily guarantee the homepage will have high total compound revenue.
On the other hand, selecting links with the highest compound revenue cannot
guarantee that the total attractiveness of the homepage satisfies the threshold
constraint. 
Further complicating the links selection problem is the multi-level feedback
structures of web links, i.e., the CTRs ($1$st level feedback) and the potential
revenues ($2$nd level feedback), are \emph{unobservable} if the links are not
selected into the homepage.

To tackle this challenging links selection problem, we formulate a stochastic
constrained multi-armed bandit with multi-level rewards. 
Specifically, arms correspond to links in the pool, and each arm is associated
with a $1$st level reward corresponding to the $1$st level feedback (the CTR) of
a link, a $2$nd level reward corresponding to the $2$nd level feedback (the
potential revenue) of the link, and a compound $2$-level reward corresponding to
the compound $2$-level feedback of the same link.
Our objective is to select a finite number of links on the homepage so as to
maximize the cumulative compound $2$-level rewards (or minimizing the regret)
subject to a threshold constraint while learning/mining of links' multi-level
feedback structures.
To achieve this objective, we design a constrained bandit algorithm \lexp{},
which is not only effective in links selection, but also
achieves provable {\em sub-linear} regret and
violation bounds.

\noindent
{\bf Contributions:} (i)~We show that a web link is intrinsically
associated with a multi-level feedback structure. 
(ii)~To our best knowledge, we are the first to model the links selection
problem with multi-level feedback structures as a stochastic constrained bandit
problem (Sec.~\ref{sec:model}). 
(iii)~We design an bandit algorithm \lexp{} that selects $L$
arms from $K$ arms ($L\le K$) with provable {\em sub-linear} regret
and violation bounds (Sec.~\ref{sec:alg}).
(iv)~We show that \lexp{} is more effective in links selection
than two state-of-the-art context-free bandit algorithms,
\textsc{CUCB}~\cite{chen2013combinatorial} and
\textsc{Exp3.M}~\cite{uchiya2010algorithms}, via extensive experiments on two real-world datasets (Sec.~\ref{sec:experiments}).

 \section{Model}
\label{sec:model}

In this section, we first introduce the context-free web links selection problem
with a $2$-level feedback structure. 
Then we show how to formulate it as a stochastic constrained bandit problem, and
illustrate how it can model the links selection problem. 
Finally, we \emph{generalize} the links selection problem to links selection
problems with $n$-level feedback with $n\geq 2$.

\subsection{Bandit Formulation ({\small Constrained $2$-level Feedback})}
\label{subsec:bandit-with-2} 

Consider a website structure with a homepage frame and a pool of $K$ web pages,
$\mathcal{W}=\{w_{1},\ldots, w_{K}\}$.
Each web page $w_{i}\in\mathcal{W}$ is addressed by a URL link and so we have
$K$ links in total. 
The homepage frame can only accommodate up to $L \le K$ links. 
When we select the link associated with web page $w_{i}, 1\le i \le
K$, and put it into the homepage frame, we can observe the following
information when users browse the homepage:
\begin{enumerate}[leftmargin=0.5cm]
\item $A_{i} \geq 0$, the probability that a user clicks the link to
  $w_i$, which is also referred to as the click-through rate (CTR);
\item $B_{i} \geq 0$, the potential revenue received from the user who clicks the link and then purchases products (or browses ads) on the web page $w_{i}$.
\end{enumerate}
Therefore, for the link associated with web page $w_{i}$, the compound revenue
is $A_{i}B_{i}$, $1\le i \le K$. 
Our task is to select $L$ links from the pool of $K$ links for the homepage
frame. 
The objective is to maximize the total compound revenue of the selected $L$
links, subject to the constraint that the total CTR of these selected $L$ links
is greater than or equal to a preset threshold $h>0$.
Let $\mathcal{I}=\{i|w_{i}\in\mathcal{W}\}$ and $|\mathcal{I}|=L$ be the set of 
indices of any $L$ links.
Denote the feasible set of the above links selection problem as $\mathcal{S}$,
which contains all possible subsets of indices of any $L$ links such that
satisfy the total CTR requirement $h$. 
Specifically, the optimal set of the $L$ links for the described links selection
problem is the solution to the following constrained knapsack problem,
\begin{equation}
\label{eq:problem}
\begin{aligned}
&\!\argmax_{\mathcal{I} \in \mathcal{S}}\sum\nolimits_{i\in\mathcal{I}}A_{i}B_{i},\\
&\mathcal{S} =\big\{ \mathcal{I}=\{i|w_{i}\in\mathcal{W}\} \big| |\mathcal{I}|=L, \sum\nolimits_{i\in\mathcal{I}}A_{i}\ge h\big\}.\\
\end{aligned}
\end{equation}

Problem (\ref{eq:problem}) is known to be NP-hard~\cite{korte1982existence}.
To tackle this problem, we relax (\ref{eq:problem}) to a probabilistic linear programming problem
(\ref{eq:problem-relaxed}) as follows,
\begin{equation}
\label{eq:problem-relaxed}
\begin{aligned}
&\!\argmax_{\bm{x}\in \mathcal{S}'}\sum\nolimits_{i=1}^{K} x_{i}A_{i}B_{i},\\
&\mathcal{S}' =\big\{\bm{x}\in[0,1]^{K}\big|\sum\nolimits_{i=1}^{K} x_{i}A_{i} \ge h, \sum\nolimits_{i=1}^{K} x_{i}=L \big\},\\
\end{aligned}
\end{equation}
where $\bm{x}=(x_{1},\dots,x_{i},\dots,x_{K})$ and $x_{i}$ represents the
probability of selecting the web page $w_{i},1\le i\le K$. 
Note that problem (\ref{eq:problem-relaxed}) is still non-trivial to solve
because $A_{i}$ and $B_{i}$ are only {\em observable} if the web page $w_i$ is
{\em selected} to the homepage frame. 
If $w_i$ is not selected, one {\em cannot} observe $A_i$ or $B_i$. 

To answer problem~\eqref{eq:problem-relaxed}, we formulate the links selection
problem as a stochastic constrained multi-armed bandit problem with $2$-level
rewards and design a constrained bandit
algorithm. 
Formally, let $\mathcal{K}=\{1,\ldots, K\}$ denote the set of arms, where each
arm corresponds to a specific link to a web page in $\mathcal{W}$. Each arm
$i\in \mathcal{K}$ is associated with two \emph{unknown} random processes,
$A_{i}(t)$ and $B_{i}(t)$, $t=1,\dots, T$. Specifically, $A_{i}(t)$
characterizes the arm $i$'s \emph{$1$st level reward} which corresponds to link
$i$'s $1$st level feedback (CTR), and $B_{i}(t)$ characterizes arm $i$'s
\emph{$2$nd level reward} which corresponds to link $i$'s $2$nd level feedback
(potential revenue) that can be collected from $w_{i}$. We assume that
$A_{i}(t)$ are stationary and independent across $i$, and the probability
distribution of $A_{i}(t)$ has a finite support. 
As for $B_{i}(t)$, they \emph{are not necessarily stationary} due to the
heterogeneity of users but are bounded across $i$.
Without loss of generality, we normalize $A_{i}(t)\in [0,1]$ and $B_{i}(t)\in
[0,1]$. 
We also assume that $A_{i}(t)$ is independent of $B_{i}(t)$ for $i\in
\mathcal{K}$, $t\ge 1$. 
Note that this assumption is reasonable as we have observed and validated that
different level feedbacks of links in the multi-level feedback structure do not
have strong correlations in the real-world datasets (please refer to
Sec.~\ref{sec:experiments}).

The stationary random process $A_{i}(t)$, is assumed to have \emph{unknown} mean
$a_{i}=\Ex[A_{i}(t)]$ for $1\le i \le K$. 
Let $\bm{a} = (a_{1},\ldots, a_{K})$.\footnote{All vectors defined in this paper
  are column vectors.} 
Let $\bm{a}_{t} = (a_{1}^{t},\ldots, a_{K}^{t})$ and $\bm{b}_{t} =
(b_{1}^{t},\ldots, b_{K}^{t})$ denote the realization vectors for the random
processes $A_{i}(t)$ and $B_{i}(t)$, respectively for $1\le i \le K$.
Let $\bm{x}_{t}=(x_{1}^{t},\ldots,x_{i}^{t},\ldots, x_{K}^{t})$ be the
\emph{probabilistic selection vector} of the $K$ arms at time $t$, where
$x_{i}^{t}\in[0,1]$ is the probability of selecting of the arm $i$ at time $t$.
The number of selected arms is $L$ at each time $t$, i.e.,
$\bm{1}^{\intercal}\bm{x}_{t} = L$, where $\bm{1}=(1,\ldots,1)$ is the one
vector. 
At time $t$, a set of $L\le K$ arms $\mathcal{I}_{t}\in\mathcal{K}$ is selected
via a dependent rounding procedure~\cite{gandhi2006dependent}, which guarantees
the probability that $i\in\mathcal{I}_{t}$ is $x_{i}^{t}$ at time $t$ (see
Sec.~\ref{sec:alg}). 
For each arm $i\in \mathcal{I}_{t}$, the algorithm observes a \emph{$1$st level
  reward} $a_{i}^{t}$ generated by $A_{i}(t)$ as well as a \emph{$2$nd level
  reward} $b_{i}^{t}$ generated by $B_{i}(t)$, and receives a \emph{compound
  $2$-level} reward. 
Specifically, the compound $2$-level reward, $g_{i}^{t}$, of an arm $i$ at time
$t$ is generated by the random process $G_{i}(t)=A_{i}(t) B_{i}(t)$. 
Let $g_{i}^{t}=a_{i}^{t}b_{i}^{t},1\le i\le K$ and $\bm{g}_{t}=(g_{1}^{t}, \ldots, g_{i}^{t}, \ldots, g_{K}^{t})$.
In addition, there is a preset threshold $h>0$ such that the average of the sum
of the $1$st level rewards needs to be above this threshold, i.e.,
$\bm{a}^{\intercal}\Ex[\bm{x}_{t}]\ge h$.\footnote{If $h=0$, the problem
  is equivalent to the classic unconstrained multiple play multi-armed
  bandit problem (MP-MAB)~\cite{anantharam1987asymptotically}.} 
At time $t$, the expected total compound $2$-level reward is
$\Ex[\sum_{t}\bm{g_{t}}^{\intercal}\bm{x}_{t}]$ with the probabilistic selection
vector $\bm{x}_{t}$, $t=1,\dots,T$.

Our objective is to design an algorithm to choose the selection vectors
$\bm{x}_{t}$ for $t=1,\ldots,T$ such that the \emph{regret}, which is also
referred to as loss compared with the oracle $\max_{\bm{a}^{\intercal}\bm{x}\ge
  h}\sum\nolimits_{t=1}^{T}\bm{g}_{t}^{\intercal}\bm{x}$, is minimized.
Specifically, the regret for an algorithm $\pi$ is,
\begin{equation}
\label{eq:regret-def}
\begin{aligned}
\textmd{Reg}_{\pi}(T) = \max_{\bm{a}^{\intercal}\bm{x}\ge h}\sum\nolimits_{t=1}^{T}\bm{g}_{t}^{\intercal}\bm{x} -
                        \Ex\big[ \sum\nolimits_{t=1}^{T}\bm{g}_{t}^{\intercal}\bm{x}^{\pi}_{t} \big],
\end{aligned}
\end{equation}
where $\bm{x}^{\pi}_{t}$ is the probabilistic selection vector calculated by the
algorithm $\pi$ at time $t$.
Note that $\bm{x}^{\pi}_{t}$ may violate the constraint initially especially
when we have little information about the arms. 
To measure the overall violations of the constraint at time $T$, the
\emph{violation} of the algorithm $\pi$ is defined as,
\begin{equation}
\label{eq:violation-def}
\begin{aligned}
\textmd{Vio}_{\pi}(T) = \Ex\big[\sum\nolimits_{t=1}^{T}(h-\bm{a}^{\intercal}\bm{x}^{\pi}_{t})\big]_{+},
\end{aligned}
\end{equation}
where $[x]_{+}=\max(x,0)$. 
Note that if the regret and violation of an algorithm are linear, the
algorithm is \emph{not learning}. 
A simple example of such algorithms is the uniform arm selection algorithm where
any $L$ arms are selected with equal probability. 
Such a random policy would result in both linear regret and linear violation as
there is a constant loss compared to the optimal policy at each time $t$.

\subsection{Generalization to $n$-level Feedback, where $n\ge2$}
\label{subsec:bandit-with-n}
We can further extend the constrained multi-armed bandit model with $2$-level
reward to the constrained multi-armed bandit model with $n$-level ($n\ge 2$)
reward, and this allows us to model links selection problem with $n$-level
feedback structure. 
Specifically, 
we can take each web page $w_{i}\in
\mathcal{W}$ as a pseudo homepage frame. 
For the pseudo homepage frame, there is a pool of web pages $\mathcal{W}',
|\mathcal{W}|=K'$. 
Then we consider selecting a subset of $L'$ links $\mathcal{I}'$,
$|\mathcal{I}'|=L'$ (that each links to a web page in $\mathcal{W}'$) for the
pseudo home page frame, with the constraint that the total CTR on the pseudo
homepage frame is above the threshold $h'$. 
Formally, for each web page $w_{i}\in \mathcal{W}$, we consider the potential revenue of
$B_{i}$ in a much more precise way, i.e., $B_{i}=\sum_{j\in
  \mathcal{I'}}A'_{j}B'_{j}$, where $A'_{j}$ is the CTR of the link associated
with the web page $w'_{j}\in \mathcal{I}'$ and $B'_{j}$ is the potential revenue collected
from the web page $w'_{j}\in \mathcal{I}'$. 
As such, we extend the links selection problem with $2$-level feedback ($A_{i}$
and $B_{i}$ where $i\in\mathcal{I}$) to a problem with $3$-level feedback
And similarly, we can further extend the problem to the problems with $n$-level
feedback structure where $n\ge 2$.

 \section{Algorithm \& Analysis}
\label{sec:alg}
In this section, we first elaborate the design of our constrained bandit
algorithm \lexp{} (which stands for ``$\mybinom{K}{L}$-Lagrangian Exponential
weights'') and present the algorithmic details. 
Then we provide both regret and violation analysis and show that our algorithm
has the attractive property of being {\em sub-linear} in both regret and
violation.
\setlength{\textfloatsep}{11pt}
\begin{algorithm}[t]
  \caption{\lexp{} ($\gamma, \delta$)}
  \label{alg:lexp}
  \begin{algorithmic}[1]
    \Statex {\textbf{Init:} $\bm{\eta}^{1}=\bm{1},\lambda_{1}=0, h>0, \beta=(1/L-\gamma/K)/(1-\gamma)$}
    \For{$t= 1,\dots, T$}
    \State \label{alg:lexp:init-set}{${\mathcal{A}}_t=\emptyset$,\,
$\mathcal{I}_{t}=\emptyset$, $\alpha_{t}=0$.}
    \If {$\max_{i\in \mathcal{K}}\eta_{i}^{t}\ge \beta\sum_{i=1}^{K}\eta_{i}^{t}$}\label{alg:lexp:if-max}
    \State \label{alg:lexp:alphat}{Find $\alpha_{t}$ such that
            $${\alpha_{t}}/\big({\textstyle\sum\nolimits_{i=1,\eta_{i}^{t}\ge \alpha_{t}}^{K}\alpha_{t}+ \textstyle\sum\nolimits_{i=1,\eta_{i}^{t}<\alpha_{t}}^{K}\eta_{i}^{t}}\big)=\beta$$}
    \State \label{alg:lexp:Ahigh}{${\mathcal{A}}_{t} = \{i: \eta_{i}^{t}\ge \alpha_{t}\}$}
    \EndIf
    \For{$i=1,\ldots,K$} \label{alg:lexp:eta-vector}
\vspace{-4pt}
  $$\tilde{\eta}_{i}^{t} =\alpha_{t}   \textmd{ if } i\in {\mathcal{A}}_t;
\textmd{ otherwise, } 
\tilde{\eta}_{i}^{t} = \eta_{i}^{t}$$
\EndFor
\vspace{-6pt}
 \For{$i=1,\ldots,K$} \label{alg:lexp:x-vectorx}
\vspace{-4pt}
 \begin{equation*}
 \tilde{x}_{i}^{t}= L[(1-\gamma)\tilde{\eta}_{i}^{t}/\textstyle\sum\nolimits_{i=1}^{K}\tilde{\eta}_{i}^{t} + \gamma/K]
 \end{equation*}
 \EndFor
\vspace{-4pt}
    \State \label{alg:lexp:depround}{$\mathcal{I}_{t}=\text{DependentRounding}(L, \tilde{\bm{x}}_{t})$}
    \For {$i\in \mathcal{I}_{t}$} \label{alg:lexp:receive-ab} receive $a_{i}^{t}$, and $b_{i}^{t}$
\EndFor   
    \For {$i=1,\ldots,K$} \label{alg:lexp:estimate-abg}
\vspace{-4pt}
\begin{align*}
 \hat a_{i}^{t}  = {a_{i}^{t}}/{ \tilde{x}_{i}^{t}}{\textstyle \mathds{1}(i\in \mathcal{I}_{t})},\,\,
 \hat g_{i}^{t}  = {a_{i}^{t}b_{i}^{t}}/{ \tilde{x}_{i}^{t}}{\textstyle \mathds{1}(i\in \mathcal{I}_{t})}
\end{align*}
\EndFor
\vspace{-4pt}
    \For {$i=1,\ldots, K$}\label{alg:lexp:eta-update}
\vspace{-4pt}
\begin{equation*}
\eta_{i}^{t+1} =
\begin{cases}
  \eta_{i}^{t}                                          & \textmd{if } i\in {\mathcal{A}}_{t}; \\
  \eta_{i}^{t}\exp[\zeta (\hat g_{i}^{t}
                         + \lambda_{t} \hat a_{i}^{t})] & \textmd{if } i\notin {\mathcal{A}}_{t}
\end{cases}
\end{equation*}
\EndFor
\vspace{-4pt}
    \State \label{alg:lexp:lambda-update}{$\lambda_{t+1} = [(1-\delta\zeta)\lambda_{t}- \zeta(\frac{\hat{\bm{a}}_{t}^{\intercal}\tilde{\bm{x}}_{t}}{1-\gamma}-h)]_{+}$}
    \EndFor
\Function{DependentRounding}{$L, \bm{x}$}\label{alg:lexp:func-dr}
\While  {exist $x_{i}\in(0,1)$}\label{alg:depround:while}
\State \label{alg:depround:ij}{Find $i,j, i\neq j$, such that $x_{i,j}\in(0,1)$}
\State \label{alg:depround:complement}{$p=\min\{1-x_{i},x_{j}\}$,
$q=\min\{x_{i},1-x_{j}\}$}
\State \label{alg:depround:xixj}{
  $(x_{i},x_{j})=
\begin{cases}
(x_{i}+p, x_{j}-p)  \text{ with prob. } \frac{q}{p+q}; \\
(x_{i}-q, x_{j}+q)  \text{ with prob. } \frac{p}{p+q}.
\end{cases}$}
\EndWhile
\Return $\mathcal{I}=\{i\,|\,x_{i}=1, 1\le i\le K\}$\label{alg:depround:return}
\EndFunction
  \end{algorithmic}
\end{algorithm}
\subsection{Constrained Bandit Algorithm}
\label{sec:contr-band-algor}

The unique challenge for our
algorithmic design is to balance between maximizing the compound multi-level
rewards (or minimizing the regret) and at the same time, satisfying the
threshold constraint. 
To address this challenge, we incorporate the theory of Lagrange method in
constrained optimization into the design of \lexp{}. 
We consider minimizing a modified regret function that includes the violation
with an \emph{adjustable} penalty coefficient that increases the regret when
there is any non-zero violation. 
Specifically, \lexp{} introduces a sub-linear bound for the Lagrange
function of $\textmd{Reg}_{\pi}(T)$ and $\textmd{Vio}_{\pi}(T)$ in the following
structure,
\begin{equation}
\label{eq:lagrange}
 \textmd{Reg}_{\pi}(T) +\rho(T) \textmd{Vio}^{2}_{\pi}(T) \le T^{1-\theta}, 0<\theta\le 1,
\end{equation} 
where $\rho(T)$ plays the role of a Lagrange multiplier. 
From (\ref{eq:lagrange}), we can derive a bound for $\textmd{Reg}_{\pi}(T)$ and
a bound for $\textmd{Vio}_{\pi}(T)$ as follows:
\begin{equation}
\label{eq:vio-bound}
\!\!\!\!\textmd{Reg}_{\pi}(T)\! \le\! O(T^{1-\theta}),\!
\textmd{Vio}_{\pi}(T)\!\le\!\!\sqrt{{O(T^{1-\theta}\! +\! LT)}/{\rho(T)}},
\end{equation}
where the bound for $\text{Vio}_{\pi}(T)$ in (\ref{eq:vio-bound}) is for the
fact that $-\textmd{Reg}_{\pi}(T)\le O(LT)$ for any algorithm $\pi$. 
Thus, with properly chosen algorithm parameter $\rho(T)$, both the regret and violation can
be bounded by sub-linear functions of $T$.

The details of \lexp{} are shown in Algorithm~\ref{alg:lexp}.
In particular, \lexp{} maintains a weight vector
$\bm{\eta}^{t}$ at time
$t$, which is used to calculate the probabilistic selection vector
$\tilde{\bm{x}}_{t}$ (line~\ref{alg:lexp:if-max} to
line~\ref{alg:lexp:x-vectorx}). 
Specially, line~\ref{alg:lexp:if-max} to line~\ref{alg:lexp:eta-vector} ensure that the
probabilities in $\tilde{\bm{x}}_{t}$ are less than or equal to 1.
At line~\ref{alg:lexp:depround}, we deploy the dependent rounding function
(line~\ref{alg:lexp:func-dr} to line~\ref{alg:depround:return}) to
select $L$ arms using the calculated $\tilde{\bm{x}}_{t}$. 
At line~\ref{alg:lexp:receive-ab}, the algorithm obtains the rewards
$a_{i}^{t}$ and $b_{i}^{t}$, and then 
gives \emph{unbiased} estimates
of $\hat a_{i}^{t}$ and $\hat g_{i}^{t}$ at line~\ref{alg:lexp:estimate-abg}. 
Specifically, the $1$st level reward $\hat a_{i}^{t}$, and the compound
$2$-level reward $\hat g_{i}^{t}$ are estimated by $a_{i}^{t}/\tilde{x}_{i}^{t}$, and
$a_{i}^{t}b_{i}^t/\tilde{x}_{i}^{t}$, respectively, such that $\Ex[\hat
a_{i}^{t}]=a_{i}^{t}$, and $\Ex[\hat g_{i}^{t}]=a_{i}^{t}b_{i}^{t}$. 
Finally, the weight vector $\bm{\eta}_{t}$ and the Lagrange multiplier
$\lambda_{t}$ are updated (line~\ref{alg:lexp:eta-update} and
line~\ref{alg:lexp:lambda-update}) using previous estimations. 

\subsection{Regret and Violation Analysis}
\label{sec:regr-viol-analys}

\begin{theorem}\label{theorem:bounds}
  Let $\zeta=\frac{\gamma\delta L}{(\delta+L)K}$,
  $\gamma=\Theta(T^{-\frac{1}{3}})$ and $\delta=\Theta(T^{-\frac{1}{3}})$ that
  satisfy $\delta\ge\frac{4(e-2)\gamma L}{1-\gamma}-L$. 
  By running the \emph{\textbf{LExp}} algorithm $\tilde{\pi}$,
  we achieve sub-linear bounds for both the regret in \eqref{eq:regret-def} and
  violation in~\eqref{eq:violation-def} as follows:
\begin{equation*}
    \emph{\text{Reg}}_{\tilde{\pi}}(T) \!\le\! O(LK\ln(K)T^{\frac23}) \text{ and }
    \emph{\text{Vio}}_{\tilde{\pi}}(T) \!\le\! O(L^\frac{1}{2}K^\frac{1}{2}T^\frac{5}{6}).
  \end{equation*}
\end{theorem}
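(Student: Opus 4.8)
The plan is to read \lexp{} as the multiple-play Exp3.M exponential-weights rule applied to a \emph{Lagrangified} surrogate reward, and to bound the regret and the violation together through a single Lyapunov (drift) argument on the multiplier $\lambda_t$. I would first record the two facts everything rests on. By the dependent-rounding guarantee $\Pr[i\in\mathcal{I}_t]=\tilde x_i^t$, the estimators of line~\ref{alg:lexp:estimate-abg} are unbiased, $\Ex[\hat a_i^t\mid\mathcal{F}_{t-1}]=a_i^t$ and $\Ex[\hat g_i^t\mid\mathcal{F}_{t-1}]=a_i^t b_i^t$; and the exploration floor $\gamma/K$ in line~\ref{alg:lexp:x-vectorx} forces $\tilde x_i^t\ge\gamma L/K$, so the importance weights obey $\hat a_i^t,\hat g_i^t\le K/(\gamma L)$, which is what keeps every second-order term finite. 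Fixing an optimal static feasible allocation $\bm{x}^{*}$ (with $\bm{a}^{\intercal}\bm{x}^{*}\ge h$, $\bm{1}^{\intercal}\bm{x}^{*}=L$) attaining the oracle value, I write $\text{Reg}_{\tilde\pi}(T)=\sum_t\bm{g}_t^{\intercal}\bm{x}^{*}-\Ex[\sum_t\bm{g}_t^{\intercal}\tilde{\bm{x}}_t]$.

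Next I set the surrogate reward $r_i^t=\hat g_i^t+\lambda_t\hat a_i^t$ and observe that line~\ref{alg:lexp:eta-update} is exactly multiplicative weights on $r_i^t$, with the cap $\mathcal{A}_t$ (lines~\ref{alg:lexp:if-max}--\ref{alg:lexp:Ahigh}) keeping $\tilde x_i^t\le 1$ as in Exp3.M. Tracking the potential $\sum_i\eta_i^t$ over the uncapped coordinates and using $e^{u}\le 1+u+(e-2)u^{2}$ (valid for $u\le1$) yields the Exp3.M-type inequality
\begin{equation*}
\sum_t\langle\bm{x}^{*}-\tilde{\bm{x}}_t,\bm{r}_t\rangle\le\frac{L\ln(K/L)}{\zeta}+(e-2)\,\zeta\sum_t\sum_i\tilde x_i^t\,(r_i^t)^2 ,
\end{equation*}
the cap contributing only a telescoping boundary term handled as in the Exp3.M potential analysis. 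The Taylor step requires $\zeta r_i^t\le1$, which I secure by first deriving the a priori bound $\lambda_t\le O(h/\delta)$ from the contraction factor $(1-\delta\zeta)$ of line~\ref{alg:lexp:lambda-update}.

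Taking expectations replaces $\bm{r}_t$ by $\bm{g}_t+\lambda_t\bm{a}$, and since $\bm{a}^{\intercal}\bm{x}^{*}\ge h$ and $\lambda_t\ge0$ I obtain
\begin{equation*}
\text{Reg}_{\tilde\pi}(T)\le\frac{L\ln(K/L)}{\zeta}+(e-2)\,\zeta\,\Ex\Big[\sum_t\sum_i\tilde x_i^t(r_i^t)^2\Big]+\Ex\Big[\sum_t\lambda_t(\bm{a}^{\intercal}\tilde{\bm{x}}_t-h)\Big].
\end{equation*}
To control the last (coupling) term and to extract the violation, I introduce $V_t=\tfrac12\lambda_t^2$ and expand its drift using line~\ref{alg:lexp:lambda-update} together with $([y]_+)^2\le y^2$; telescoping from $\lambda_1=0$ gives both an upper bound on $\Ex[\sum_t\lambda_t(\bm{a}^{\intercal}\tilde{\bm{x}}_t-h)]$ and the aggregate estimate $\sum_t\lambda_t^2=O(\tfrac{\zeta}{\delta}L^2T)$. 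The crucial point is that the $\lambda_t^2$-part of the variance term scales with $\sum_t\lambda_t^2$, which is \emph{not} the naive $T\lambda_{\max}^2$ but this much smaller drift quantity; the hypothesis $\delta\ge\frac{4(e-2)\gamma L}{1-\gamma}-L$ is precisely the relation $\frac{4(e-2)\zeta K}{\delta(1-\gamma)}\le1$ that lets the $-\delta\zeta\lambda_t^2$ drift contribution absorb that part of the variance. Substituting $\zeta=\frac{\gamma\delta L}{(\delta+L)K}$ and $\gamma,\delta=\Theta(T^{-1/3})$ makes the leading term $L\ln(K/L)/\zeta=\Theta(LK\ln(K)T^{2/3})$ dominate the remaining $O(T^{1/3})$-scale variance, giving $\text{Reg}_{\tilde\pi}(T)=O(LK\ln(K)T^{2/3})$; reading the violation off the same drift through the conversion \eqref{eq:vio-bound} (with $\rho(T)$ of order $\zeta$) then yields $\text{Vio}_{\tilde\pi}(T)=O(L^{1/2}K^{1/2}T^{5/6})$.

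The main obstacle I anticipate is exactly this circular coupling between the multiplier and the exponential-weights bound: the variance term grows with $\lambda_t^2$, while any useful control of $\lambda_t$ and of $\sum_t\lambda_t^2$ comes from the drift inequality, whose regret interpretation in turn presupposes the weight dynamics. Breaking the circle requires ordering the estimates correctly---first the uniform $\lambda_t=O(h/\delta)$ from the regularizing factor $(1-\delta\zeta)$, then the aggregate $\sum_t\lambda_t^2=O(\tfrac{\zeta}{\delta}L^2T)$ from telescoping the drift---and then checking that the stated relation among $\delta,\gamma,\zeta$ makes the drift \emph{cancel} the offending variance contribution rather than merely bounding it, so that the Taylor expansion, and with it the entire regret bound, is legitimate.
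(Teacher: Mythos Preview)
Your overall plan coincides with the paper's: run the Exp3.M potential argument on the Lagrangified reward $\hat g_i^t+\lambda_t\hat a_i^t$, use the a~priori bound $\lambda_t\le h/\delta$ (from the contraction $(1-\delta\zeta)$ in line~\ref{alg:lexp:lambda-update}) to legitimize the second-order Taylor step $e^{y}\le1+y+(e-2)y^{2}$, and recognize that the hypothesis $\delta\ge\frac{4(e-2)\gamma L}{1-\gamma}-L$ is exactly $\frac{2(e-2)\zeta K}{1-\gamma}\le\frac{\delta}{2}$, so that the $\sum_t\lambda_t^2$ produced by the Exp3.M variance is absorbed by the negative drift of the dual. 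With that cancellation the regret bound $O(LK\ln(K)T^{2/3})$ follows as you say.

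The gap is in the violation. Your Lyapunov choice $V_t=\tfrac12\lambda_t^2$ is the OGD analysis of the dual with comparator $\lambda=0$: telescoping gives
\[
\sum_t\lambda_t\Big(\frac{\hat{\bm a}_t^{\intercal}\tilde{\bm x}_t}{1-\gamma}-h\Big)\ \le\ -\frac{\delta}{2}\sum_t\lambda_t^2+\frac{\zeta}{2}\sum_t(\nabla f_t(\lambda_t))^2,
\]
which is precisely what feeds the cancellation above, but it contains no term carrying $[\sum_t(h-\bm a^{\intercal}\tilde{\bm x}_t)]_+$. Hence nothing of the form $\rho(T)\,\textmd{Vio}_{\tilde\pi}^2(T)$ ever appears, and your step ``reading the violation off the same drift through~\eqref{eq:vio-bound}'' has no input to act on. Relatedly, the intermediate claim $\sum_t\lambda_t^2=O(\tfrac{\zeta}{\delta}L^2T)$ is both unnecessary (the cancellation disposes of $\sum_t\lambda_t^2$ outright) and not obtainable from the $\lambda=0$ drift alone: the telescoped inequality reads $\delta\sum_t\lambda_t^2\le\sum_t\lambda_t\big(h-\tfrac{\hat{\bm a}_t^{\intercal}\tilde{\bm x}_t}{1-\gamma}\big)+O(\zeta L^2T)$, and the first right-hand term is exactly the unknown violation-weighted quantity you are trying to control.

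The paper closes this by keeping a \emph{free} comparator $\lambda\ge0$ in the dual step, i.e.\ doing the drift on $\tfrac12(\lambda_t-\lambda)^2$ (equivalently, the standard OGD inequality $\sum_t[f_t(\lambda_t)-f_t(\lambda)]\le\frac{\lambda^2}{2\zeta}+\frac{\zeta}{2}\sum_t(\nabla f_t)^2$ for $f_t(\lambda)=\tfrac{\delta}{2}\lambda^2+\lambda(\tfrac{\hat{\bm a}_t^{\intercal}\tilde{\bm x}_t}{1-\gamma}-h)$). After combining with the primal bound and cancelling $\sum_t\lambda_t^2$, the free $\lambda$ leaves on the left the concave quadratic $-\big(\tfrac{\delta T}{2}+\tfrac{1}{2\zeta}\big)\lambda^2+\lambda\sum_t\big((1-\gamma)h-\bm a^{\intercal}\tilde{\bm x}_t\big)$; maximizing over $\lambda$ then yields $\frac{[\sum_t((1-\gamma)h-\bm a^{\intercal}\tilde{\bm x}_t)]_+^2}{2(\delta T+1/\zeta)}$, which is the structure~\eqref{eq:lagrange} with $\rho(T)=\frac{1}{2(\delta T+1/\zeta)}=\Theta(\zeta)$, and~\eqref{eq:vio-bound} gives $\textmd{Vio}_{\tilde\pi}(T)=O(L^{1/2}K^{1/2}T^{5/6})$. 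Your plan is one line away: replace $V_t=\tfrac12\lambda_t^2$ by $V_t=\tfrac12(\lambda_t-\lambda)^2$ and carry $\lambda$ to the end.
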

  
\begin{proof}
  From line~\ref{alg:lexp:lambda-update} of the algorithm, we have:
$\textstyle\lambda_{t+1}\!=\!\big[(1-\delta\zeta)\lambda_{t}- \zeta(\frac{\hat{\bm{a}}_{t}^{\intercal}\tilde{\bm{x}}_{t}}{1-\gamma}-h)\big]_{+}\le \big[(1-\delta\zeta)\lambda_{t}+\zeta h)\big]_{+}$.
By induction on $\lambda_t$, we can obtain $\lambda_t\le\frac{h}{\delta}$. 
Let $\Phi_{t}=\sum_{i=1}^{K}\eta_{i}^{t}$ and $\tilde{\Phi}_{t}=
\sum_{i=1}^{K}\tilde{\eta}_{i}^{t}$. 
Define $\bm r_{t}=\bm{g}_{t}+\lambda_{t}\bm a_{t}$ and $\hat{\bm r}_{t}=\hat{\bm
  g}_{t}+ \lambda_{t} \hat{\bm a}_{t}$. 
Let $\bm x$ be an arbitrary probabilistic selection vector which satisfies
$x_{i}\in[0,1]$, $\bm{1}^{\intercal}\bm{x}_{t}= L$ and
$\bm{a}^{\intercal}\bm{x}\ge h$. 
We know that
\begin{align}
&\sum\limits_{t=1}^T\ln\frac{\Phi_{t+1}}{\Phi_t}=\ln\frac{\Phi_{T+1}}{\Phi_1}=\ln(\sum\limits_{i=1}^K\eta_i^{T+1})-\ln K \nonumber\\
&\ge\ln(\sum_{i=1}^K x_i\eta_i^{T+1})-\ln K  \nonumber
\ge\sum_{i=1}^K\frac{x_i}{L}\sum_{t:i\notin \mathcal A_t}\zeta\hat{r}_{i}^{t}-\ln\frac{K}{L}  \nonumber\\
&=\frac{\zeta}{L}\sum\limits_{i=1}^Kx_i\sum\limits_{t:i\notin \mathcal A_t}\hat{r}_{i}^{t}-\ln\frac{K}{L}. \label{eq:lexp:primal:lowerbound}
\end{align}
As $\zeta=\frac{\gamma\delta L}{(\delta+L)K}$ and
$\lambda_t\le\frac{h}{\delta}$, we have $\zeta \hat r_{i}^{t}\le1$. Therefore,
\begin{align}
&\frac{\Phi_{t+1}}{\Phi_t}=\!\!\!\!\!\sum_{i\in \mathcal{K}/\mathcal A_{t}}\!\!\!\frac{\eta_i^{t+1}}{\Phi_t}\!+\!\!\sum_{i\in \mathcal A_t}\!\frac{\eta_i^{t+1}}{\Phi_t}\!=\!\!\!\!\!\!\sum_{i\in \mathcal{K}/\mathcal A_t}\!\!\!\frac{\eta_i^{t}}{\Phi_t}\exp(\zeta\hat{r}_i^t)\!+\!\!\sum_{i\in \mathcal A_t}\frac{\eta_i^{t}}{\Phi_t} \nonumber \\
&\le\sum_{i\in \mathcal{K}/\mathcal A_t}\frac{\eta_i^{t}}{\Phi_t}[1+\zeta\hat{r}_i^t+(e-2)\zeta^2(\hat{r}_i^t)^2]+\sum_{i\in \mathcal A_t}\frac{\eta_i^{t}}{\Phi_t} \label{eq:lexp:expfunction:upperbound} \\
&=1+\frac{\tilde{\Phi}_{t}}{\Phi_t}\sum_{i\in \mathcal{K}/\mathcal A_t}\frac{\eta_i^{t}}{\tilde{\Phi}_t}\left[\zeta\hat{r}_i^t+(e-2)\zeta^2(\hat{r}_i^t)^2\right] \nonumber \\
&\le 1+\sum_{i\in \mathcal{K}/\mathcal A_t}\frac{\tilde{x}_i^t/L-\gamma/K}{1-\gamma}\left[\zeta\hat{r}_i^t+(e-2)\zeta^2(\hat{r}_i^t)^2\right] \nonumber \\
&\le 1\!+\!\frac{\zeta}{L(1-\gamma)}\sum_{i\in \mathcal{K}/\mathcal A_{t}}\!\!\tilde{x}_i^t\hat{r}_i^t+\frac{(e-2)\zeta^2}{L(1-\gamma)}\sum_{i\in \mathcal{K}/\mathcal A_t}\tilde{x}_i^t(\hat{r}_i^t)^2 \nonumber \\
&\le 1\!+\!\frac{\zeta}{L(1-\gamma)}\sum_{i\in \mathcal{K}/\mathcal A_{t}}\!\!\tilde{x}_i^t\hat{r}_i^t+\frac{(e-2)\zeta^2}{L(1-\gamma)}\sum_{i=1}^{K}(1+\lambda_t)\hat{r}_i^t \label{eq:lexp:estimation-square}.
\end{align}
Inequality \eqref{eq:lexp:expfunction:upperbound} holds because $e^y\le
1+y+(e-2)y^2$ for $y\le 1$, and inequality \eqref{eq:lexp:estimation-square} uses
the fact that $\tilde{x}_i^t\hat{r}_i^t=r_i^t\le 1+\lambda_t$ for $i\in I_{t}$
and $\tilde{x}_i^t\hat{r}_i^t=0$ for $i\notin I_{t}$. 
Since $\ln(1+y)\le y$ for $y\ge 0$, we can get
\begin{equation*}
\ln\frac{\Phi_{t+1}}{\Phi_t}\le \frac{\zeta}{L(1-\gamma)}\sum_{i\in \mathcal{K}/\mathcal A_{t}}\!\!\!\tilde{x}_i^t\hat{r}_i^t+\frac{(e-2)\zeta^2}{L(1-\gamma)}\sum_{i=1}^{K}(1+\lambda_t)\hat{r}_i^t.
\end{equation*}
Then using \eqref{eq:lexp:primal:lowerbound}, it follows that
\begin{align*}
\frac{\zeta}{L}\sum_{i=1}^K x_i \!\!\sum_{t:i\notin \mathcal A_t}& \hat{r}_{i}^{t}-\ln\frac{K}{L} \le\frac{\zeta}{L(1-\gamma)}\sum_{t=1}^{T}\sum_{i\in \mathcal{K}/\mathcal A_{t}}\tilde{x}_i^t\hat{r}_i^t \\
&+\frac{(e-2)\zeta^2}{L(1-\gamma)}\sum_{t=1}^{T}\sum_{i=1}^{K}(1+\lambda_t)\hat{r}_i^t.
\end{align*}
As $\tilde{x}_i^t=1$ for $i\in \mathcal A_t$, and $\sum_{i=1}^Kx_i\sum_{t:i\in \mathcal A_t}\hat{r}_i^t \le \frac{1}{1-\gamma}\sum_{t=1}^T\sum_{i\in \mathcal A_{t}}\hat{r}_i^t$ trivially holds, we have
\begin{equation*}
\sum_{t=1}^T\hat{\bm r}_{t}^{\intercal}\bm{x}-
\frac{L}{\zeta}\ln\frac{K}{L}\le\frac{\sum_{t=1}^T\hat{\bm{r}}_{t}^{\intercal}\tilde{\bm{x}}_{t}}{1-\gamma}+\frac{(e-2)\zeta}{1-\gamma}\sum_{t=1}^T\sum_{i=1}^{K}(1+\lambda_t)\hat{r}_{i}^{t}.
\end{equation*}
Taking expectation on both sides, we have
\begin{align}
&\Ex\Big[\sum\nolimits_{t=1}^T\hat{\bm r}_{t}^{\intercal}\bm{x}-\frac{1}{1-\gamma}\sum\nolimits_{t=1}^T\hat{\bm{r}}_{t}^{\intercal}\tilde{\bm{x}}_{t}\Big] \nonumber \\
&\le \frac{L}{\zeta}\ln\frac{K}{L}+\frac{(e-2)\zeta}{1-\gamma}\sum\nolimits_{t=1}^T\Ex\Big[\sum\nolimits_{i=1}^{K}(1+\lambda_t)\hat{r}_{i}^{t}\Big] \nonumber \\
&\le \frac{L}{\zeta}\ln\frac{K}{L}+\frac{2(e-2)\zeta K}{1-\gamma}T+ \frac{2(e-2)\zeta K}{1-\gamma}\sum\nolimits_{t=1}^T\lambda_t^{2}, \label{eq:lexp:primal}
\end{align}
where \eqref{eq:lexp:primal} is from the inequality
$\Ex[\sum\nolimits_{i=1}^{K}(1+\lambda_t)\hat{r}_{i}^{t}]=\sum_{i=1}^{K}(1+\lambda_{t})(g_{i}^{t}+
\lambda_{t} a_{i}^{t})\le2K+2K\lambda_{t}^{2}$.
Next, we define a series of functions
$f_{t}(\lambda)=\frac{\delta}{2}\lambda^{2}+\lambda(\frac{1}{1-\gamma}\hat{\bm{a}}_{t}^{\intercal}\tilde{\bm{x}}_{t}-h),
t=1,\dots,T$, and we have $\lambda_{t+1}=\left[\lambda_t-\zeta\nabla
  f_t(\lambda_t)\right]_+$. 
It is clear that $f_{t}(\cdot)$ is a convex function for all $t$. 
Thus, for an arbitrary $\lambda$, we have
\begin{align*}
&(\lambda_{t+1}-\lambda)^2=(\left[\lambda_t-\zeta\nabla f_t(\lambda_t)\right]_+-\lambda)^2 \\
&\le(\lambda_t-\lambda)^2\!+\!\zeta^2(\delta\lambda_t-h+\frac{\hat{\bm{a}}_{t}^{\intercal}\tilde{\bm{x}}_{t}}{1-\gamma})^{2}\!-\!2\zeta(\lambda_t-\lambda)\nabla f_t(\lambda_t) \\
&\le(\lambda_t-\lambda)^2+2\zeta^2h^2+2\zeta^2\frac{(\hat{\bm{a}}_{t}^{\intercal}\tilde{\bm{x}}_{t})^2}{(1-\gamma)^2}+2\zeta[f_t(\lambda)-f_t(\lambda_t)].
\end{align*}
Let $\Delta = [(\lambda_t-\lambda)^2-(\lambda_{t+1}-\lambda)^2]/(2\zeta)+\zeta L^{2}$. We have,
\begin{align*}
&f_t(\lambda_t)-f_t(\lambda)\le \Delta+\frac{\zeta(\hat{\bm{a}}_{t}^{\intercal}\tilde{\bm{x}}_{t})^2}{(1-\gamma)^2} 
\!=\!\Delta+\frac{\zeta L^{2} (\frac{1}{L}\hat{\bm{a}}_{t}^{\intercal}\tilde{\bm{x}}_{t})^2}{(1-\gamma)^2}
\\
&\le\Delta+\frac{\zeta
 L^{2}}{(1-\gamma)^2}\frac{1}{L}\sum_{i=1}^{K}(\tilde{x}_{i}^{t}\hat{a}_i^{t})^2
\le\Delta+\frac{\zeta L}{(1-\gamma)^2}\sum\limits_{i=1}^K{a}_i^t.
\end{align*}
Taking expectation over $\sum_{t=1}^{T}[f_t(\lambda_t)-f_t(\lambda)]$, we have
\begin{align}
&\Ex\big[\frac{\delta}{2}\sum\nolimits_{t=1}^T\lambda_t^2-\frac{\delta}{2}\lambda^2 T+\sum\nolimits_{t=1}^T\lambda_t(\frac{\hat{\bm{a}}_{t}^{\intercal}\tilde{\bm{x}}_{t}}{1-\gamma}-h) \nonumber\\
&-\lambda\sum\nolimits_{t=1}^T(\frac{\hat{\bm{a}}_{t}^{\intercal}\tilde{\bm{x}}_{t}}{1-\gamma}-h)\big]\le\frac{\lambda^2}{2\zeta}+\zeta L^2T+\frac{\zeta LK}{(1-\gamma)^2}T. \label{eq:lexp:dual}
\end{align}
Combining \eqref{eq:lexp:primal} and \eqref{eq:lexp:dual}, we have,
\begin{align*}
&\sum\nolimits_{t=1}^T\bm{g}_{t}^{\intercal}\bm{x}-\frac{\Ex[\sum\nolimits_{t=1}^{T}\bm{g}_{t}^{\intercal}\tilde{\bm{x}}_{t}]}{1-\gamma}+\Ex\big[-(\frac{\delta T}{2}+\frac{1}{2\zeta})\lambda^2\\
& \quad + \lambda\sum\nolimits_{t=1}^T(h-\frac{\bm{a}^{\intercal}\tilde{\bm{x}}_{t}}{1-\gamma})\big]\le \frac{L}{\zeta}\ln\frac{K}{L}+\frac{2(e-2)\zeta KT}{1-\gamma}\\
&+\zeta L^2 T+\frac{\zeta LKT}{(1-\gamma)^2}+ (\frac{2(e-2)\zeta K}{1-\gamma}-\frac{\delta}{2})\sum\nolimits_{t=1}^T\lambda_t^2\\
&+\Ex[\sum\nolimits_{t=1}^T\lambda_t(h-\bm{a}^{\intercal}\bm{x})].
\end{align*}
Since $\zeta=\frac{\gamma\delta L}{(\delta+L)K}$ and
$\delta\ge\frac{4(e-2)\gamma L}{1-\gamma}-L$, we have $\frac{2(e-2)\zeta
  K}{1-\gamma}\le\frac{\delta}{2}$. As $\bm{a}^{\intercal}\bm{x}\ge h$, we have
\begin{align*}
&(1-\gamma)\sum\nolimits_{t=1}^T\bm{g}_{t}^{\intercal}\bm{x}-\Ex\big[\sum\nolimits_{t=1}^{T}\bm{g}_{t}^{\intercal}\tilde{\bm{x}}_{t}\big]\\
&\quad+\Ex\big[\lambda\sum\nolimits_{t=1}^T((1-\gamma)h-\bm{a}^{\intercal}\tilde{\bm{x}}_{t}) -(\frac{\delta T}{2}+\frac{1}{2\zeta})\lambda^2\big]\\
&\le \frac{L}{\zeta}\ln\frac{K}{L}+2(e-2)\zeta KT+\zeta L^2 T+\frac{\zeta LKT}{1-\gamma}.
\end{align*}
Let
$\lambda=\frac{\sum_{t=1}^T((1-\gamma)h-\bm{a}^{\intercal}\tilde{\bm{x}}_{t})}{\delta
  T+1/\zeta}$. Maximize over $\bm{x}$ and we have,
\begin{align*}
&\max_{\bm{a}^{\intercal}\bm{x}\ge h}\sum\nolimits_{t=1}^T\bm{g}_{t}^{\intercal}\bm{x}-\Ex\big[\sum\nolimits_{t=1}^{T}\bm{g}_{t}^{\intercal}\tilde{\bm{x}}_{t}\big]\\
&\quad\quad+\Ex\Big\{\frac{\big[\sum\nolimits_{t=1}^T((1-\gamma)h-\bm{a}^{\intercal}\tilde{\bm{x}}_{t})\big]_{+}^{2}}{2(\delta T+1/\zeta)}\Big\}\\
&\le \frac{L}{\zeta}\ln\frac{K}{L}+2(e-2)\zeta KT+\zeta L^2 T+\frac{\zeta LKT}{1-\gamma}+\gamma LT.
\end{align*}
Let $F(T)=\frac{L}{\zeta}\ln\frac{K}{L}+2(e-2)\zeta KT+\zeta L^2 T+\frac{\zeta
  LKT}{1-\gamma}+\gamma LT$. 
Then we have results in the form of (\ref{eq:vio-bound}):
$ {\text{Reg}}_{\tilde{\pi}}(T) \le F(T),\text{ and }
{\text{Vio}}_{\tilde{\pi}}(T) \le \sqrt{2\left(F(T)+LT\right)(\delta
  T+1/\zeta)}+\gamma LT.$ Let $\gamma=\Theta(T^{-\frac{1}{3}})$ and
$\delta=\Theta(T^{-\frac{1}{3}})$. 
Thus, we have $\zeta=\Theta(\frac{1}{K}T^{-\frac{2}{3}})$. 
Finally, we have ${\text{Reg}}_{\tilde{\pi}}(T) \le O(LK\ln(K)T^{\frac23})$ and $
{\text{Vio}}_{\tilde{\pi}}(T) \le O(L^\frac{1}{2}K^\frac{1}{2}T^\frac{5}{6})$.
\end{proof}

 \section{Experiments}
\label{sec:experiments}
In this section, we first examine the web links' multi-level feedback structures
in two real-world datasets from the Kaggle Competitions, \emph{Avito Context Ad
  Clicks}~\cite{kaggle2015avito} and \emph{Coupon Purchase
  Prediction}~\cite{kaggle2016coupon}, referred to as ``\adclick{}'' and
``\coupon{}'' in this paper. 
Then we conduct a comparative study by applying \lexp{} and two state-of-the-art
context-free bandit algorithms, \textsc{CUCB}~\cite{chen2013combinatorial} and
\textsc{Exp3.M}~\cite{uchiya2010algorithms}, to
show the effectiveness of \lexp{} in
links selection. 
\subsection{Multi-level Feedback Structure Discovery}
\label{subsec:structure-discovery}

The \adclick{} data is collected from users of the website
\href{https://www.avito.ru}{Avito.ru}
where a user who is interested in an ad has to first click to view the ad before
making a phone request for further inquiries.
The data involves the logs of the visit stream and the phone request stream of
$71,677,831$ ads. 
We first perform some data cleaning. 
For each ad in \adclick{}, we count the number of views of the ad and
thereafter, count the number of phone requests that ad received. 
In particular, we filter out the ads that have an abnormally large number of views
(greater than $2000$
), and the ads that receive few numbers of phone requests (smaller than $100$
). 
Finally, we obtain $225$ ads from \adclick{}. 
For each of these $225$ ads, we divide the number of phone requests by the
number of views to get the Phone Request Rate. 
We normalize the numbers of views of each ad to the interval $[0,1]$ using
min-max scaling. 
The normalized number of views can be taken as the CTRs of
the ads. 
As such, we find the multi-level feedback structure for each ad in the
\adclick{} data: the CTR corresponds to the ad's \emph{$1$st level feedback},
the Phone Request Rate corresponds to the ad's \emph{$2$nd level feedback}, and
the product of CTR and the Phone Request Rate corresponds to the \emph{compound
  $2$-level feedback}.

The \coupon{} data is extracted from transaction logs of $32,628$ coupons on
the site \href{https://ponpare.jp/}{ponpare.jp}, where users first browse
a coupon and then decide whether to purchase the coupon or not.
We extract $271$ coupons from \coupon{}. 
For each coupon, we divide its number of purchase by the number it was browsed,
and take the ratio as the \emph{Coupon Purchase Rate}. 
We then normalize all the browsed times to $[0,1]$ using min-max scaling and
refer to the normalized browsed times as to the CTRs of the coupons. 
Thus, for each coupon in \coupon{}, the CTR corresponds to the coupon's
\emph{$1$st level feedback}, the Coupon Purchase Rate corresponds to the
coupon's \emph{$2$nd level feedback}, and the product of the CTR and the Coupon
Purchase Rate is the \emph{compound $2$-level feedback}.

Next, we validate our previous claim in Sec.~\ref{sec:model} that the
$1$st level feedback and the $2$nd level feedback in the multi-level feedback
structure do not have a strong correlation. 
In \adclick{}, we find that the CTRs and the Phone Request Rates 
are not strongly correlated with a correlation coefficient $-0.47$.
Similarly, low correlation can also be found in \coupon{}
where the correlation coefficient is
$-0.27$ only.

\begin{figure*}[!thbp]
  \setlength{\abovecaptionskip}{0pt} \setlength{\belowcaptionskip}{-1pt}
  \setlength{\subfigcapskip}{-3pt}
  \centering
  \subfigure[Experiment 1: Cumulative rewards]{
  \includegraphics[width=0.246\textwidth]{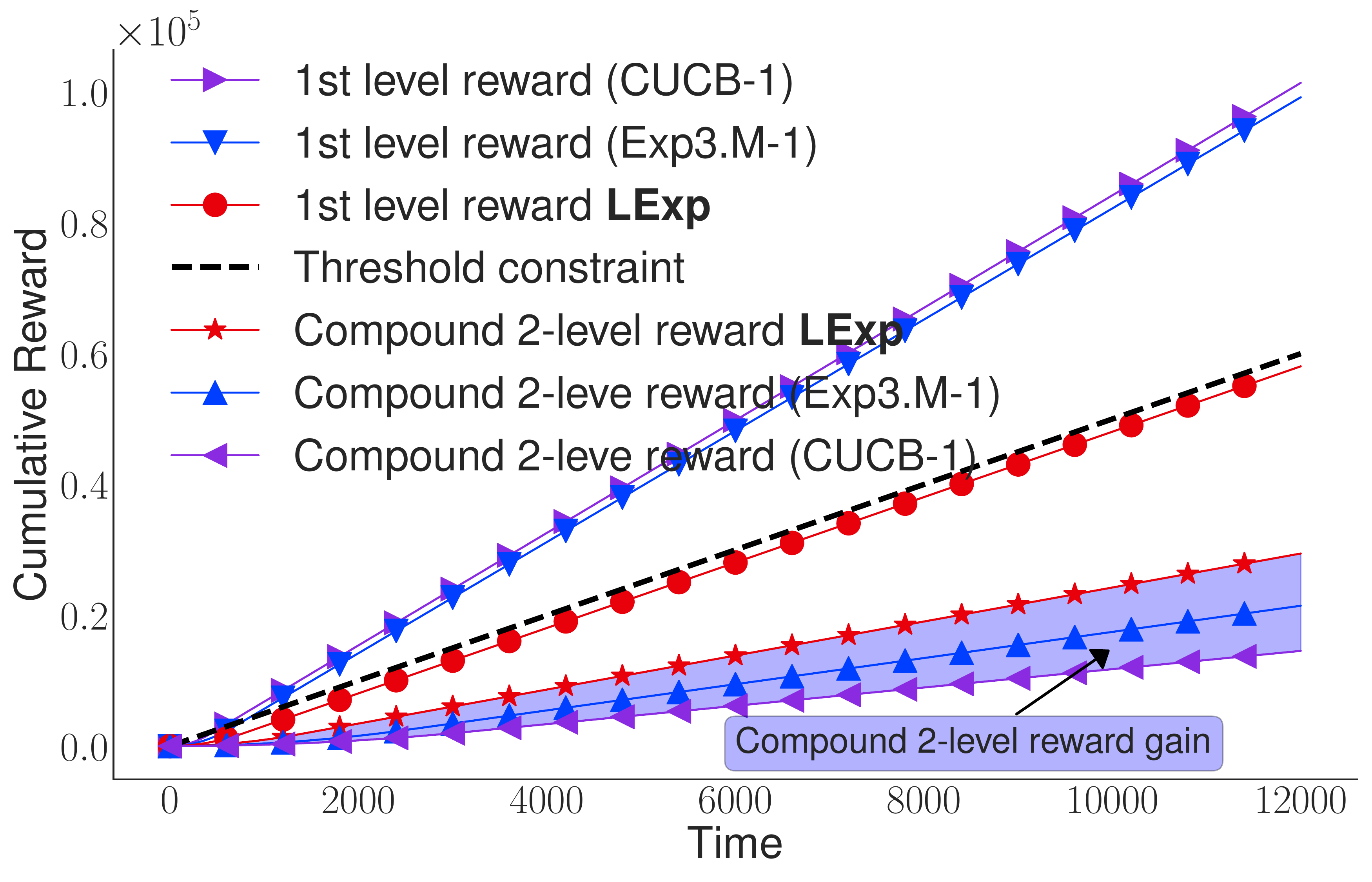}
 \label{fig:cmp-ad:cumreward1}
  }\hspace{-2ex}
  \subfigure[Experiment 2: Regret/Violation]{
  \includegraphics[width=0.246\textwidth]{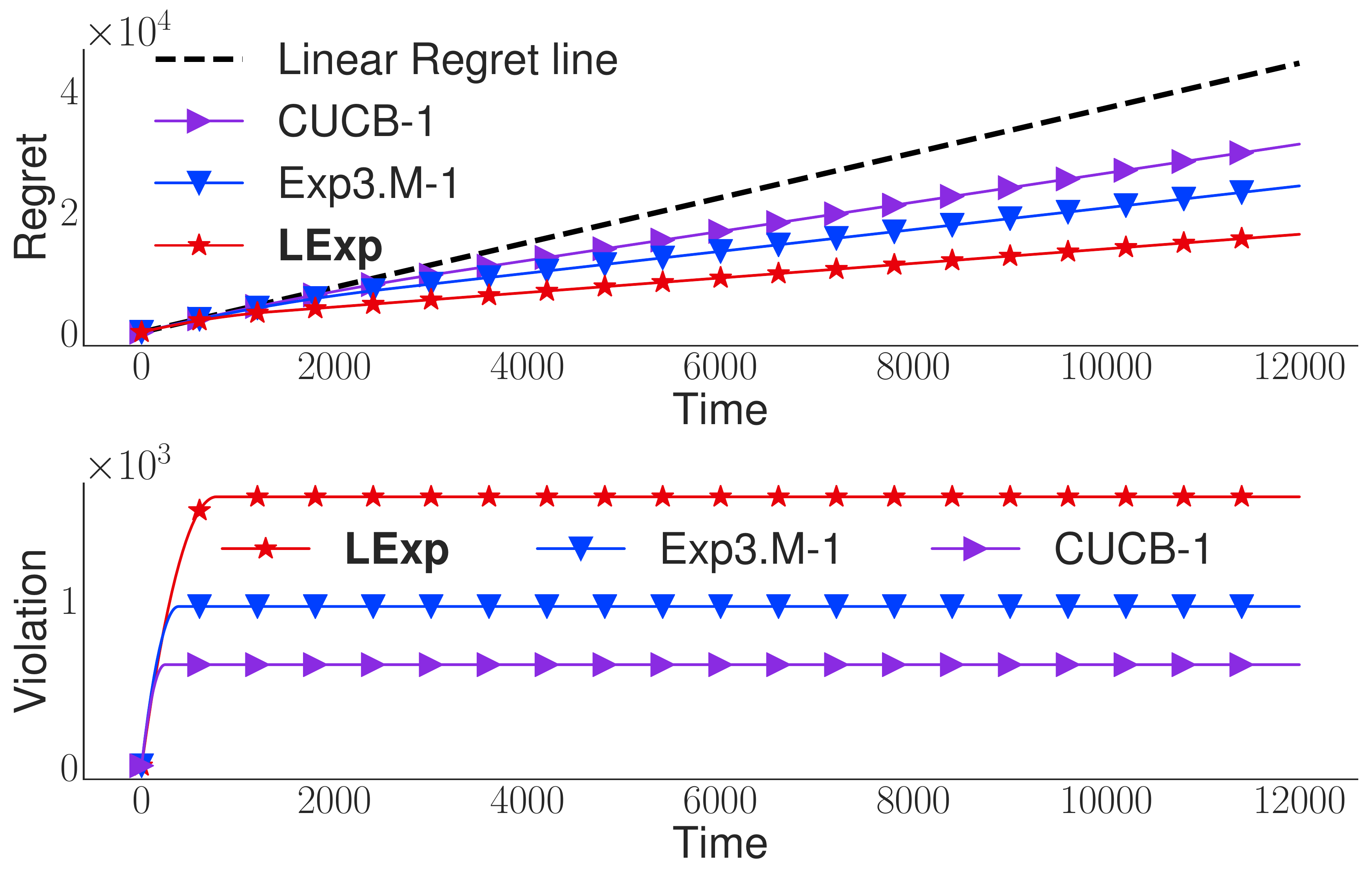}
 \label{fig:cmp-ad:regret-vio1}
  }\hspace{-2ex}
  \subfigure[Experiment 3: Cumulative reward]{
  \includegraphics[width=0.246\textwidth]{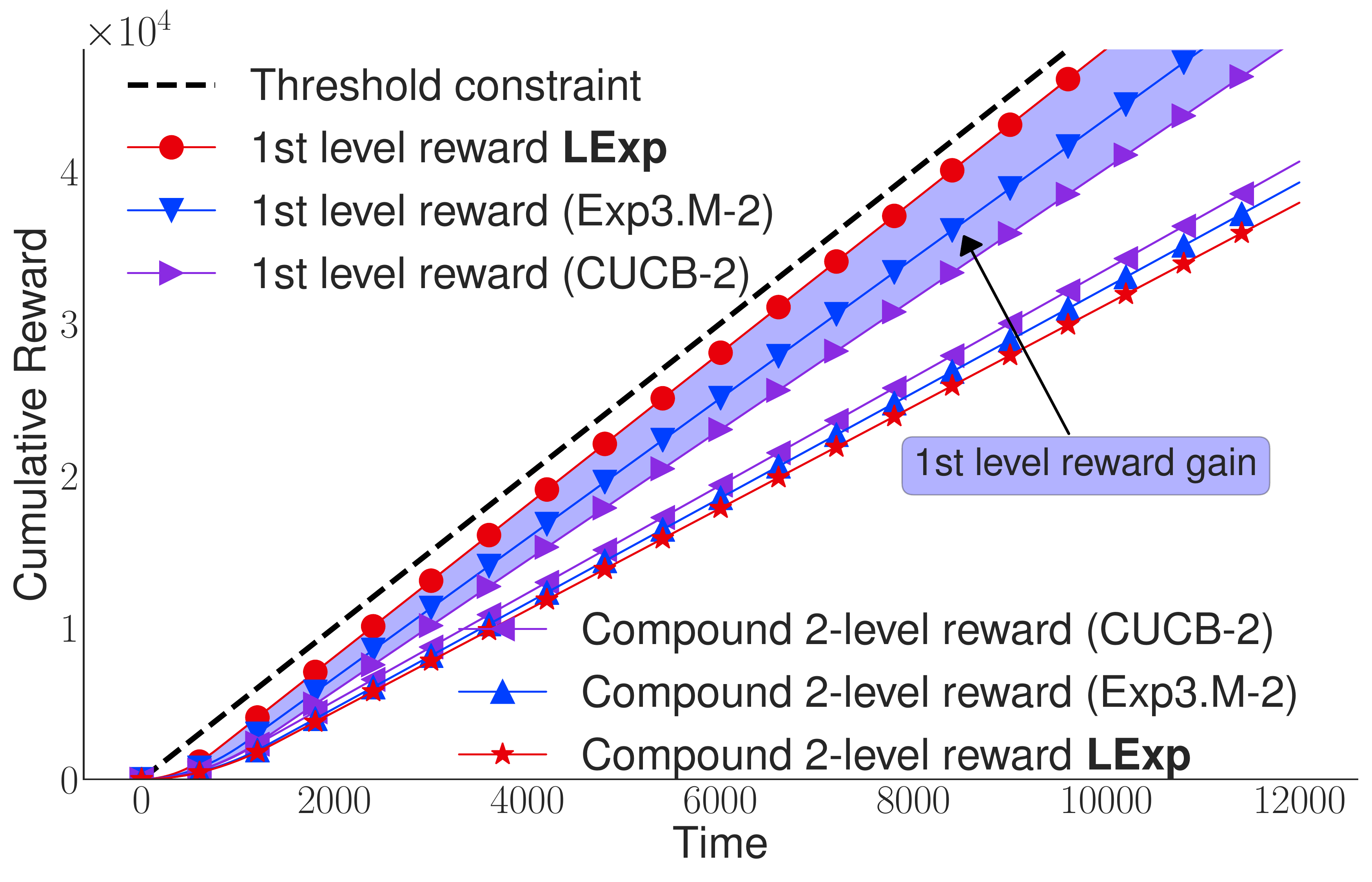}
  \label{fig:cmp-ad:cumreward2}
  }\hspace{-2ex}
  \subfigure[Experiment 4: Regret/Violation]{
  \includegraphics[width=0.246\textwidth]{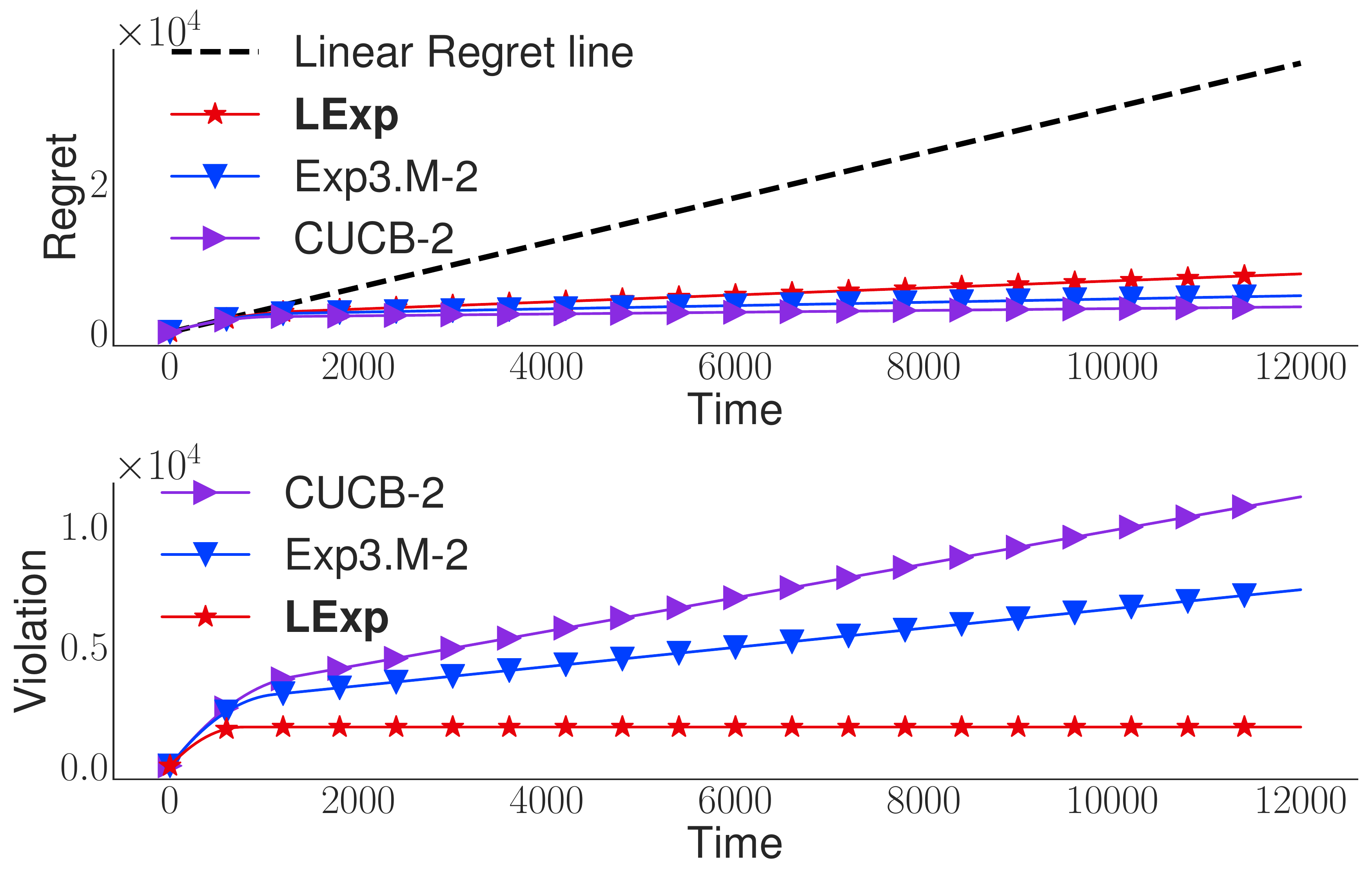}
  \label{fig:cmp-ad:reg-vio2} 
  }\hspace{-2ex}
\caption{\textmd{Four groups of comparative experiments among \lexp{}, \textsc{Exp3.M} and \textsc{CUCB}
                 on \adclick{}. $K=225$, $L=20$, $h=5$ and $T=12000$ ($\gamma=0.019$, $\delta=0.021$). Note that both regrets and violations are calculated in an accumulative fashion.}}
 \label{fig:cmp-ad}
\vspace{-10pt}
\end{figure*}

\subsection{Comparative Study on Links Selection}
\label{sec:performance-cmp-study}

We simulate the multi-level feedback structures in the real-world
datasets and model the probabilistic ads/coupons selection process with
 time-variant rewards.
In particular, for each of the $225$ ads in \adclick{}, we treat its CTR/$1$st
level feedback as a Bernoulli random variable with the mean CTR taking from
\adclick{}, and we vary the Phone Request Rate/$2$nd level reward over time in a
similar fashion as a sinusoidal wave (similar to~\cite{BesbesNIPS2014}): the
$2$nd level reward starts from a random value drawn uniformly from $0$ to the
mean Phone Request Rate taking from \adclick{}; then in each time slot, it
increases or decreases at rate ${10}/{T}$ until reaching the mean or $0$. 
This time-variant rewards can model the seasonal fluctuations of the potential revenue of the ads. 
For each of the $271$ coupons in
\coupon{}, we simulate its $2$-level rewards in the same way.

In our performance comparison, the \textsc{CUCB} algorithm always selects the
top-$L$ arms with the highest UCB (upper confidence bound) indices without
considering any constraint. 
For the \textsc{CUCB} algorithm, on the one hand, if we only want to maximize
the total $1$st level rewards, the $L$ arms of the highest UCB indices $\hat
a_{i}^{t} + \sqrt{3\ln t/(2 N_{i}(t))}$ are selected at each time $t$
(\textsc{CUCB-1}), where $N_{i}(t)$ is the number of times that the arm $i$
has been selected by time $t$. 
On the other hand, if we only want to maximize the total compound $2$-level
rewards, the $L$ arms of the highest UCB indices $\hat g_{i}^{t} + \sqrt{3\ln t/(2
  N_{i}(t))}$ will be selected (\textsc{CUCB-2}).
For \textsc{Exp3.M},
only the $1$st level reward estimation $\bm{\hat a}_{t}$ is considered
when we only maximize the total $1$st level rewards (\textsc{Exp3.M-1}), and
only the compound $2$-level reward estimation $\bm{\hat g}_{t}$ is considered
when we only maximize the total compound $2$-level rewards
(\textsc{Exp3.M-2}). 
In our experiments, the \emph{cumulative} $1$st level reward and \emph{the
  cumulative} compound $2$-level reward at time $t$ are calculated using
$\sum_{t'=1}^{t}\sum_{i\in \mathcal{I}_{t'}}a_{i}^{t'}$ and
$\sum_{t'=1}^{t}\sum_{i\in \mathcal{I}_{t'}}g_{i}^{t'}$, respectively. 
The \emph{regret} at time $t$ is calculated using
$\sum\nolimits_{t'=1}^{t}\bm{g}_{t'}^{\intercal}\bm{x}^{*} -
\sum\nolimits_{t'=1}^{t}\sum_{i\in \mathcal{I}_{t'}}g_{i}^{t'}$ where
$\bm{x}^{*}$ is the optimal probabilistic selection vector by solving
$\max_{\bm{a}^{\intercal}\bm{x}\ge
  h}\sum\nolimits_{t'=1}^{t}\bm{g}_{t'}^{\intercal}\bm{x}$ with $\bm{a}$ taking
from the datasets. 
The \emph{violation} at time $t$ is calculated using
$\sum\nolimits_{t'=1}^{t}(h-\sum_{i\in \mathcal{I}_{t'}}a_{i}^{t'})_{+}$.

For \adclick{}, we run \lexp{}, the \textsc{Exp3.M} variants:
\textsc{Exp3.M-1} and \textsc{Exp3.M-2}, and the \textsc{CUCB} variants:
\textsc{CUCB-1} and \textsc{CUCB-2}. 
The parameter settings are shown in Fig.~\ref{fig:cmp-ad}.

\noindent
{\bf $\bullet$ Experiment 1} (considering total $1$st level rewards only):
In Fig.~\ref{fig:cmp-ad:cumreward1}, we compare the cumulative rewards of
\lexp{}, \textsc{Exp3.M-1} and \textsc{CUCB-1}. 
Specifically, the \emph{cumulative} threshold constraint is a linear function of
$t$, i.e., $h\cdot t$, shown by the black-dash line. 
One can observe that \lexp{} satisfies the threshold constraint
because its slope is equal to $h$ after time $t=841$, meaning that there is no
violation for \lexp{} after $t=841$. 
On the contrary, \textsc{CUCB-1} and
\textsc{Exp3.M-1} initially do not satisfy the
threshold constraint and both exceed the threshold constraint. 
Furthermore, the cumulative compound $2$-level rewards of \textsc{CUCB-1}
 and \textsc{Exp3.M-1} are
both below that of \lexp{} as the ads selected by
\textsc{CUCB-1} and \textsc{Exp3.M-1} with high $1$st level rewards do not
necessarily result in a high $2$nd level rewards. 
Thus, by selecting ads that satisfy the threshold constraint, even when the
cumulative $1$st level rewards is restricted by the threshold, \lexp{} gives the
highest total compound $2$-level rewards with the compound $2$-level rewards
gain shown in the blue shaded area.

\noindent
{\bf $\bullet$ Experiment 2} (Regrets \& Violations on Experiment 1):
Fig.~\ref{fig:cmp-ad:regret-vio1} shows the regrets and
violations of \textsc{CUCB-1}, \textsc{Exp3.M-1} and \lexp{}. 
First, we draw the linear regret line which shows the largest
regret for the ads selection problem where no ads are selected at each time,
i.e., all the rewards are lost. 
The regret of \textsc{CUCB-1} and the regret of
\textsc{Exp3.M-1} are very close to the linear regret
line. 
They are also greater than the regret of \lexp{}, which has a
sub-linear property, and this further confirms the fact that cumulative compound
$2$-level rewards of \textsc{CUCB-1} and \textsc{Exp3.M-1} are both below that
of \lexp{} in Fig.~\ref{fig:cmp-ad:cumreward1}. 
For the violations, all the three algorithms, \lexp{}, \textsc{Exp3.M-1}
 and \textsc{CUCB-1} first
show some increases and then remain constant. 
This is because they all are in the {\em exploration phase}: they first select
ads with random $1$st level rewards when the $1$st level rewards are still
unknown, and later select ads with high $1$st level rewards that satisfy or
exceed the threshold constraint. 
\lexp{} is less
aggressive than \textsc{Exp3.M-1} and \textsc{CUCB-1} which both select the ads
with total $1$st level rewards that far exceed the threshold. 
But as we can observe in Fig.~\ref{fig:cmp-ad:cumreward1}, 
\lexp{} performs much better than these algorithms. In summary, \lexp{} takes longer to explore the optimal ads but after some
trials, the violation \emph{at each time} diminishes to zero. This can be
observed from Fig.~\ref{fig:cmp-ad:regret-vio1} since the violations remains unchanged after about $850$ time slots.

\noindent
{\bf $\bullet$ Experiment 3} (considering the compound $2$-level rewards only):
Fig.~\ref{fig:cmp-ad:cumreward2} shows the cumulative rewards of \lexp{},
\textsc{Exp3.M-2} and \textsc{CUCB-2}.
Specially, the cumulative compound $2$-level rewards of \textsc{CUCB-2}
and \textsc{Exp3.M-2} are
both larger than that of \lexp{} as they consider maximizing the
total compound $2$-level rewards only. 
However, the cumulative $1$st level rewards of both \textsc{Exp3.M-2}
and \textsc{CUCB-2} increase slower than the threshold
constraint $h\cdot t$ as their slopes are less than $h$. 
This means that they violate the constraint and the gap between their cumulative
$1$st level rewards and the threshold constraint continues to grow as time goes
by. 
For the cumulative $1$st level rewards of \lexp{}, the slope
increases up to $h$ and maintains at $h$ and therefore the gap becomes a
constant. 
In summary, \lexp{} ensures that the threshold constraint is satisfied and
produces the additional $1$st level reward gain (blue-shaded area) compared with
\textsc{Exp3.M-2} and \textsc{CUCB-2}.

\noindent
{\bf $\bullet$ Experiment 4} (Regrets \& Violations on Experiment 3):
Fig.~\ref{fig:cmp-ad:reg-vio2} shows the regrets and violations
of \lexp{}, \textsc{Exp3.M-2} and \textsc{CUCB-2}. 
For the regrets, the regrets of \textsc{Exp3.M-2},
\textsc{CUCB-2} and \lexp{} are all
{\em sub-linear} and below the linear regret line, as these
three algorithms all aimed at minimizing the regret. 
Among them, \lexp{} has a comparable regret but it also has an addition
property, which is to satisfy the threshold constraint. 
As for the violation, the violations of both \textsc{CUCB-2} and \textsc{Exp3.M-2} end up linear as
their cumulative $1$st level rewards increase slower than the cumulative
threshold constraint as shown in Fig.~\ref{fig:cmp-ad:cumreward2}. 
In contrast, the violation of \lexp{} increases and then
eventually stays constant.
This confirms that \lexp{} aims to satisfy the constraint and it will not make
mistake after some rounds of learning.
This confirms that \lexp{} aims to satisfy the constraint and it will not make
mistake after some rounds of learning.

For \coupon{}, we obtain similar experimental results and we can draw similar
conclusions. Therefore,
we omit the detailed descriptions for conciseness. In
summary,
our experimental results show that \lexp{} is the only algorithm which balances
the regret and violation in the ads/coupons selection problem.

\section{Related Work}
\label{sec:related-work}

One common approach to the links selection problem is to perform A/B
testing~\cite{Deng2017WSDM}, which splits the traffic for
different sets of links on two different web pages, and then evaluate their
rewards. 
However, A/B testing does not have any loss/regret guarantee as it splits equal
amounts of traffic to the links regardless of the links' rewards. 
That said, A/B testing is still widely used in commercial web systems. 
Our algorithm can be viewed as a complementary approach to A/B testing, e.g.,
our algorithm can select the set of links with the $1$st level reward above a
given threshold and facilitate a 
more efficient A/B testing for the
links selection problem.

Another approach is to model the links selection problem as contextual bandit
problems.
\cite{li2010contextual} first formulated a contextual bandit problem aiming at
selecting articles/links that maximize the total number of clicks based on the
user-click feedback. 
Recently, \cite{collfilteringSIGMETRICS16} and~\cite{CFBanditSIGIR16}
incorporated the collaborative filtering method into contextual bandit
algorithms using users' profiles to recommend web links. 
However, contextual information is not always available due to cold
start~\cite{elahi2016survey} or blocking of cookie
tracking~\cite{meng2016trackmeornot}. 
Moreover, contextual bandit problem formulations neglect the multi-level
feedback structures of the links and do not consider any constraint.

Our bandit formulation is related to the bandit models with multiple plays, where
multiple arms are selected in each round. 
\cite{uchiya2010algorithms} presented the \textsc{Exp3.M} bandit
algorithm that extends the single-played \textsc{Exp3}
algorithm~\cite{Auer:2003:nonsto} to multiple-played cases using exponential
weights.
\cite{chen2013combinatorial}
proposed an algorithm that selects multiple arms with the highest upper
confidence bound (UCB) indices. 
\cite{Komiyama1506} presented the multiple-play Thompson Sampling
algorithm (MP-TS) for arms with binary rewards.
\cite{Vorobev2015WWW}
proposed a bandit-based ranking algorithm for ranking search queries. 
Our bandit model differs from these bandit models as we further consider the
constraint on the total $1$st level rewards in selecting the multiple arms.

Note that the constraint in our constrained bandit model is very different from
that in bandit with budgets~\cite{dengbanditICDM07,xia2016budgeted} and 
bandit with knapsacks~\cite{agrawal2016efficient}.
For these works, the optimal stopping time is considered since no arms can be
selected/played if the budget/knapsacks constraints are violated. 
However, the constraint in our model does not pose such restrictions and the
arm selection procedure can continue without stopping. 
Finally, our constrained bandit problem is related but different from the bandit
model considered in~\cite{mahdavionline} which tries to balance regret and
violation.
They only considered selecting a single arm without any multi-level
rewards. 
While in our work, we consider how to select multiple arms and each arm is
associated with multi-level rewards, making our model more challenging and
applicable to the web links selection problem.
 \section{Conclusion}
\label{sec:conclusion-futurework}
In this paper, we reveal the intrinsic multi-level feedback structures of web
links and formulate the web links selection problem. 
To our best knowledge, we are the first to model the links selection problem
with multi-level feedback structures as a stochastic constrained bandit problem.
We propose and design an effective links selection algorithm \lexp{} with
{\em provable sub-linear regret} and {\em violation} bounds. 
Furthermore, we demonstrate how to learn/mine the multi-level reward structures
of web links in two real-world datasets. 
We carry out extensive experiments to compare \lexp{} with the
state-of-the-art context-free bandit algorithms and 
show that \lexp{} is superior in selecting web links
with constrained multi-level feedback by balancing both regret and
violation. 
\ignore{For future work, we will consider reducing the complexity when the
  ranking of the links and the dependence among the links are involved, and
  designing an anytime bandit algorithm that eliminates the dependence on the
  prior knowledge of $T$ with even lower regret and violation bounds.} \bibliographystyle{IEEEtran}

\end{document}